\documentclass[a4paper,fleqn]{cas-sc}

\usepackage[authoryear,longnamesfirst]{natbib}

\usepackage{graphicx}
\usepackage{amsmath}
\usepackage{amsthm}

\usepackage{booktabs}
\usepackage{algorithm}
\usepackage{algorithmic}

\usepackage{multirow}
\usepackage{rotating}

\newtheorem{theorem}{Theorem}
\newtheorem{example}{Example}
\def\tsc#1{\csdef{#1}{\textsc{\lowercase{#1}}\xspace}}
\tsc{WGM}
\tsc{QE}
\tsc{EP}
\tsc{PMS}
\tsc{BEC}
\tsc{DE}

\begin{document}
\let\WriteBookmarks\relax
\def\floatpagepagefraction{1}
\def\textpagefraction{.001}
\shorttitle{Overestimation Bias: Settling
Large Discrete Action Space via Action Intersection}

\title [mode = title]{Revisiting Overestimation Bias Problem of Q-learning: Settling Large Discrete Action Space via 
Action Intersection}                      

\author[1]{Pu Li}
\affiliation[1]{
organization={Chongqing Institute of Green and Intelligent Technology},
state={Chongqing},
country={China},
}

\author[2]{Tao Tan}
\affiliation[2]{
organization={University of Science and Technology of China},
city={Hefei},
state={Anhui},
country={China},
}
\author[2]{Hong Xie}

\author[1]{Xiaoyu Shi}

\author[1]{Mingsheng Shang}

\begin{abstract}
This paper considers the overestimation bias problem of Q-learning in the setting of a large action space, for the purpose of relieving the bottleneck of existing methods. 
We find that the large action space increases the randomness in Q-value estimation. The randomness makes two paradigms that drive the major literature on the overestimation problem have their own bottlenecks: 
the coupling paradigm, i.e., the optimal action and its Q-value are estimated with the same Q-function, always has a positive bias. This is because randomness leads to some actions having abnormally high estimated values than their true values, and the coupling methods prefer these actions. The decoupling paradigm, i.e., the optimal action and its Q-value are estimated with two independent Q-functions, always has a negative bias. This is because randomness increases the estimation gap between the two independent Q-tables for the same action.
This paper shows that action intersection can be a simple yet powerful strategy to relieve these bottlenecks.
The action intersection strategy enables semi-decoupling via two designs: 
(1) it allows two Q-functions to share a certain fraction of trajectory data; 
(2) if a data sample is shared, each Q-function is updated using the coupling paradigm; otherwise, using the decoupling paradigm.  
Two properties make the action intersection strategy powerful:  
(1) attaining a large bias range, i.e., varying the data sharing fraction, the estimation bias varies from underestimating to overestimating; 
(2) fine granularity: the action intersection size can be made arbitrarily finer to enable finer control. 
We consider two experiment settings, i.e., tabular and deep RL, 
deep RL experiments show that our method outperforms several SOTA baselines drastically; 
tabular experiments reveal why our method can achieve superior performance.  
We believe that our work opens a new door for mitigating the overestimation bias of Q-learning. 
\end{abstract}


\begin{keywords}
Coupling Paradigm \sep Decoupling Paradigm \sep Semi-decoupling Paradigm \sep Action Intersection
\end{keywords}

\maketitle

\section{Introduction}
Q-learning \cite{watkins1992q} is one of the most widely adopted and studied Reinforcement Learning (RL) algorithms \cite{chu2022formal}\cite{zhang2023m2dqn}\cite{zhao2023inverse}. 
However, Q-learning suffers from the overestimation bias\cite{thrun2014issues}\cite{hasselt2010double}. 
Previous work shows that in Q-learning, function approximators induce noise in the output predictions, and such noise can lead to a systematic overestimation of Q-values. 
Note that the noisy estimates of Q-values 
are caused by stochastic and unknown rewards and state transitions.  
Double Q-learning is a classical algorithm to remove the overestimation bias of Q-learning,
but it may lead to the underestimation bias.
Due to the simplicity and convergence guarantees under some common assumptions \cite{melo2001convergence},
these value-estimation algorithms, such as Q-learning and Double Q-learning, 
have become a practical tool, for instance, 
as a value estimator in RL fine-tuning Large Language Models\cite{zhai2024fine}\cite{rafailov2024r},
where the action space is large. 
Considering that other potential application scenarios also feature large action spaces (e.g., recommender systems), we ask the following question:\textbf{\textit{Will large action spaces become a bottleneck for bias control?}}

To answer this question, we conduct an experiment to investigate how the estimation bias changes as the number of arms increases. The result shows that indeed large action space amplifies the estimation bias. Example \ref{example:estimationBias} illustrates our empirical study.
\begin{example}
Consider a multi-armed bandit setting,
which includes one state $S$ with $N$ actions $\{a_1,...,a_{N}\}$.
Following previous work \cite{zhang2017weighted},
each action $a_i$ returns a reward obeys $\mathcal{N}(0, 10^2)$.
We set the discount factor $\gamma=0.95$,
and the maximum expected Q-value is 0. Our experiments investigate how the estimation error of these algorithms varies as the action space size increases.
Figure \ref{example:estimationBias}(A) shows the maximum Q-value of Q-learning algorithm after a 10,000-step update with different numbers of arms. We can observe that when the number of arms is 80, the curve of Q-learning is at the top, and when the number of arms is 20, the curve is at the bottom. And all curves of Q-learning are above the curve of unbiased curve. This implies that Q-learning has a positive estimation bias, and the estimation bias increases as the size of the action space increases.
Figure \ref{example:estimationBias}(B) shows the maximum Q-value of the Double Q-learning algorithm after a 10,000-step update with different numbers of arms. We can observe that when the number of arms is 80, the curve of Double Q-learning is at the bottom, and when the number of arms is 20, the curve is at the top. And all curves of Double Q-learning are under the curve of the unbiased curve. This implies that Double Q-learning has a negative estimation bias, and the estimation bias increases as the size of the action space increases.
Figure \ref{example:estimationBias}(C) shows the relationship between the size of the action space and the maximum Q-values.
One can observe that as the action space increases, the length of all bars rises, i.e., the maximum Q-value estimated by each method progressively deviates from the optimal Q-value of 0. This implies that current methods suffer from estimation bias in environments with large action spaces, and the estimation bias and the size of the action space have a positive correlation.
More experiment details refer to Section \ref{sec:exp-bandit}.
\begin{figure}
  \centering
  \includegraphics[width=\linewidth]{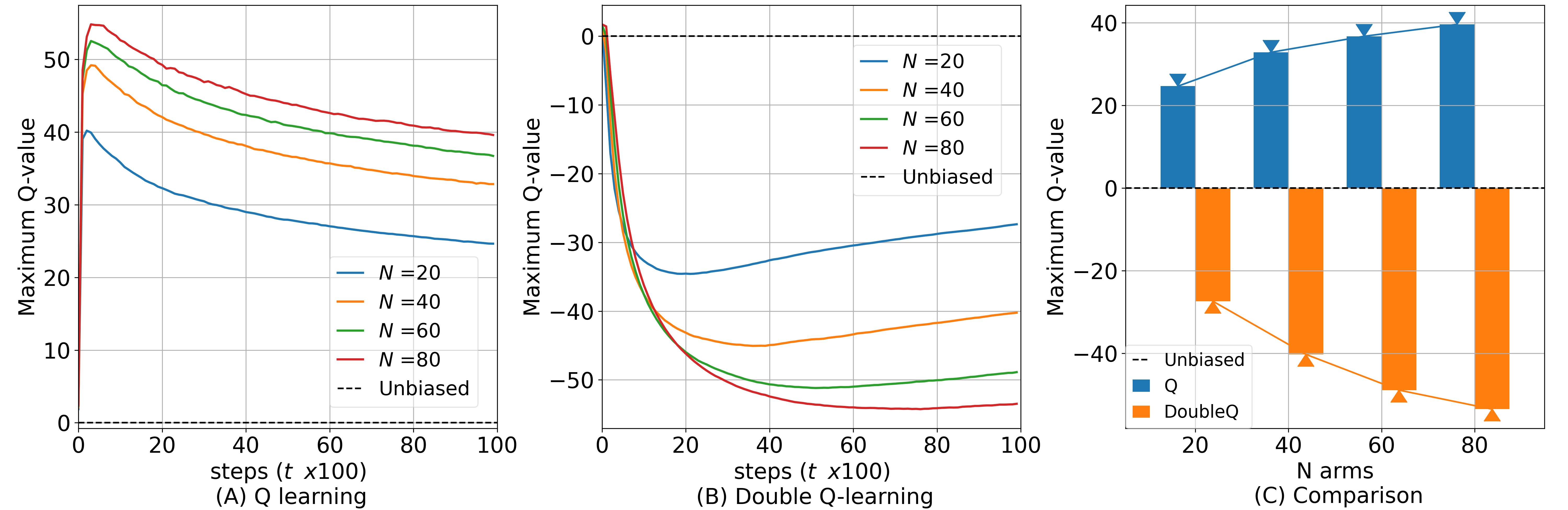}
  \caption{The maximum Q-value of Q-learning and Double Q-learning in a large action space.}
  \label{fig:sample}
\end{figure}

\label{example:estimationBias}
\end{example}

Existing estimation bias control methods also face the challenge of the large action space. To analyze the reason, we categorize existing methods into two classes, coupling methods and decoupling methods, for separate discussion. 
Coupling methods evaluate the maximum expected Q-value using a single Q-function, i.e., both the optimal action and its Q-value are obtained from the same Q-function, and always have a positive bias. Q-learning, Maxmin Q-learning, Averaged Q-learning, and Order Q-learning all belong to coupling methods. More specifically, in the setting of a large action space, the randomness in Q-values increases with the number of actions, leading to some actions having abnormally higher estimated values than their true values. Coupling-based methods consistently select these overestimated actions to update the estimator, thereby introducing positive bias.
Decoupling methods evaluate the maximum expected Q-value using two independent Q-functions, i.e., the optimal action and its Q-value are estimated by two independent Q-functions, and always have a negative bias. Double Q-learning and EBQL belong to decoupling methods. In this setting of a large action space, the estimation gap between the two independent Q-tables for the same action widens as the number of actions increases. The estimated value given by Q-table 2 for the optimal action selected by Q-table 1 may be abnormally low relative to the true value, thus resulting in negative bias.

According to the above analysis, the core challenge in the setting of a large action space is the high randomness in selecting the optimal action and estimating its value. To alleviate this problem, we propose an action intersection strategy based on Double Q-learning, and obtain \underline{A}ction \underline{I}ntersection \underline{D}ouble \underline{Q}-learning (AIDQ). The core idea is maintaining two Q-tables (e.g., $Q^1$ and $Q^2$), and using $Q^1$  to select a set of optimal actions(e.g., $topK$ actions), using $Q^2$ to identify the optimal action in the set by estimating their Q-values. Note that selecting $topK$ actions is a simple yet effective strategy in the large action space setting, because it decreases the randomness in selecting the optimal action and estimating its value. 
When $K=1$, our method turns to Double Q-learning, which has a negative bias. 
When $K=|\mathcal{A}|$, where $|\mathcal{A}|$ is the size of the action space, our method turns to two Q-tables alternately performing Q-learning updates, which has a positive bias. This feature helps to enable fine-grained control over a large bias range, from negative bias to positive bias in the large action space setting.

In summary, this paper makes the following contributions:
\begin{itemize}
\item To control the overestimation bias in a large action space setting, i.e., 
the estimation bias is always positive or negative,
this paper proposed an action intersection strategy,
i.e., control the estimation bias to vary continuously from negative bias to positive bias.
\item Based on the Double Q-learning, we introduce an action intersection Double Q-learning,
which is a simple yet powerful method that enables fine-grained control over a large bias range in the setting of a large action space.
\item We evaluate our method in both tabular and deep RL settings.
Extensive experiment results show that our
method outperforms SOTA baselines drastically in different settings. 
More specifically,
our method improves the average return per episode over SOTA baselines by at least 53.21\%
in the Asterix environment.
\end{itemize}

\section{Related Work}

The overestimation problem of Q learning was first identified and researched by Sebastian Thrun et al. in \cite{thrun2014issues}. Specifically, this work shows that function approximators induce noise in the output predictions, and such noise can lead to a systematic overestimation of Q-values. As a result of this overestimation bias, the agent could fail to learn an optimal policy in certain situations. To address the overestimation problem, many methods were proposed. We divide works aiming to address overestimation bias in Q learning into two categories: coupling method and decoupling method.

\subsection{Coupling Method}
They evaluate the maximum expected Q-value using a single Q-function,
i.e., both the optimal action and its Q-value are obtained from the same Q-function,
always has a positive bias.
Double Q-learning\cite{hasselt2010double} points out that the max operator leads to the significant issue of overestimation. Motivated by this perspective, Softmax Q-learning \cite{song2019revisiting}, Soft Mellowmax Q-learning \cite{song2019revisiting} 
and Mellowmax Q-learning \cite{asadi2017alternative} use the softmax operator to replace the max operation in the generalized value iteration framework. And they control the estimation bias via the softmax temperature hyperparameter. Their theoretical results quantify how the softmax Bellman operator reduces the overestimation bias.
Also motivated by the perspective that the max operator in Q-learning can lead to overestimation, Averaged Q-learning \cite{anschel2017averaged} suggests a different solution to the overestimation phenomenon. Averaged Q-learning points out that the magnitude of the overestimation bias is controlled by the target approximation error variance. Thus, Averaged Q-learning builds an averaged Q-estimator over multiple Q-functions to reduce the target approximation error variance and uses this averaged Q-estimator to estimate the maximum expected Q-value. The estimation bias of Averaged Q-learning is inversely proportional to the number of Q-functions. However, with a finite number of Q-functions, it always has a positive bias.
Maxmin Q-learning \cite{lan2020maxmin} builds a minimum Q-estimator over multiple Q-functions, and uses this minimum Q-estimator to estimate the maximum expected Q-value. The estimation bias of Maxmin Q-learning is inversely proportional to the number of Q-functions. However, the bias control is coarse-grained and discrete, i.e., it depends on the number of Q-functions.
AdaEQ \cite{wang2021adaptive} is a natural extension of Maxmin Q-learning to adaptively select the number of Q-functions
and aims to remove the estimation bias. The theoretical finding shows that approximation error characterization can serve as the feedback for flexibly controlling the ensemble size.
Doubly Bounded Q-Learning \cite{ren2021estimation} leverages an abstracted dynamic programming as a second value estimator to reduce the underestimation bias and to eliminate non-optimal fixed points of Double Q-learning.
Order Q-learning \cite{tan2024oder}
builds an order Q-estimator over multiple Q-functions,
and uses this order Q-estimator to estimate the maximum expected Q-value.
However, the bias control is discrete, i.e., it depends on the number of Q-functions and the index of the order statistic function.

\subsection{Decoupling Methods}
They evaluate the maximum expected Q-value using two independent Q-functions,
i.e., the optimal action and its Q-value are estimated by two independent Q-functions,
always has a negative bias.
Double Q-learning \cite{hasselt2010double} uses two independent Q-functions to evaluate the maximum expected Q-value,
where one is used to select the optimal action and the other to estimate its Q-value.
It removes the overestimation bias of Q-learning, but leads to the underestimation bias.
Note that this underestimation bias can cause a slower learning speed and a larger performance penalty than the overestimation bias \cite{abliz2022underestimation}.
EBQL \cite{peer2021ensemble} is a natural extension of Double Q-learning to ensembles; this works aim to address the underestimation bias of Double Q-learning. It maintains $K$ Q-functions. For each update step, it randomly chooses a Q function to select the optimal action and uses all the other functions to estimate the Q-value of the optimal action. Theoretical analysis proves that EBQL-like updates yield lower MSE when estimating the maximal mean of a set of independent random variables. However, it always has a negative bias.
AC-CDQ \cite{jiang2021action} attempts to balance the overestimation bias and the underestimation bias via a clipping operation. However, the bias control is coarse-grained and discrete, i.e., it depends on the number of action candidates.

There are also some other works on the setting of a large action space.
\cite{dulac2015deep} proposed approach leverages prior information about the actions to embed them in a continuous space upon which it can generalize. And then uses an approximate nearest neighbor search to find the set of closest discrete actions in logarithmic time.
\cite{weisz2018sample} examines the application of reinforcement learning to dialogue policy optimisation in a spoken dialogue system, which has a relatively large action set. The key algorithm is ACER\cite{wang2016sample}\cite{munos2016safe}, which combines a lot of technologies such as actor-critic methods, off-policy reinforcement learning with experience replay, and various methods aimed at reducing the bias and variance of estimators. Although this algorithm has achieved relatively good performance, it is overly complex and lacks theoretical analysis.
\cite{majeed2021exact} addresses the large action-space problem by sequentializing actions, which can reduce the action space size significantly, even down to two actions. However, it leads to an increased planning horizon, thereby increasing costs.
\cite{ming2023cooperative} proposes to decompose the complex decision task in a large action space into multiple decision sub-tasks in small action subsets, using a rule-based action division method.
However, these methods are not specifically designed to address the overestimation problem in environments with large action spaces. Additionally, these methods primarily focus on the deep learning domain, often involving complex algorithms and models, and offer limited theoretical analysis of estimation errors.

This paper proposes a new semi-decoupling method,
which evaluates the maximum expected Q-value using two dependent Q-functions,
i.e., the optimal action and its Q-value are estimated by two dependent Q-functions.
We build the dependence between two Q-functions via an action intersection strategy,
and enable fine-grained and continuous control over a large bias range, i.e., 
varying the data sharing fraction smoothly shifts the bias from negative to positive.

\section{Preliminary}
We consider a Markov Decision Process (MDP \cite{chen2021randomized}) 
defined by the tuple $(\mathcal{S}, \mathcal{A}, P, R, \gamma)$,
where $\mathcal{S}$ is the state space,
$\mathcal{A}$ is the action space,
$P(s' \mid s, a)$ is the state transition probability,
$R(s,a)$ is the reward function,
and $\gamma \in (0,1)$ is the discount factor.
At each time step $t \in \mathbb{N}$,
the agent observes the current state $s_t \in \mathcal{S}$,
selects an action $a_t \sim \pi(\cdot \mid s_t)$ according to a policy $\pi$,
transitions to the next state $s_{t+1} \sim P(\cdot \mid s_t, a_t)$,
and receives a reward $r_t = R(s_t, a_t)$.
The objective of the agent is to learn an optimal policy that maximizes the expected discounted return.

\subsection{Q-learning}
Q-learning \cite{watkins1992q} maintains a single Q-table denoted by $Q_t(s,a)$.
It first initializes the Q-table $Q_{0}(s,a)$,
and then collects the interaction sample \((s_t,a_t,r_t,s_{t+1})\) by $\epsilon$-greedy policy \cite{ghasemipour2021emaq}, 
where $\epsilon\in[0,1]$.
Note that under the $\epsilon$-greedy policy,
the agent selects a greedy action
from $\arg\max_{a \in \mathcal{A}} Q_t(s_t, a)$ with probability $(1-\epsilon)$,
and selects a random action from $\mathcal{A}$ with probability $\epsilon$.
In each time step $t$,
the updating rule of Q-learning is:
\begin{equation}
    \begin{cases}
        a^*_{t+1} =  \arg\max_{a_{t+1}} Q_t(s_{t+1}, a_{t+1}), \\
        Y_t = r_t + \gamma Q_{t}(s_{t+1}, a^*_{t+1}), \\
        Q_{t+1}(s_t,  a_t)  = (1-\alpha) Q_{t}(s_t, a_t) + \alpha Y_t,
    \end{cases}  
\label{equ:q}
\end{equation}
where \(\alpha\) is the learning rate,
$Y_t$ is the TD target Q-value for $Q_t(s_t,a_t)$.

Equation (\ref{equ:q}) illustrates that, during the Q-learning update,
both the optimal action selection and its value evaluation rely on the same Q-table.
This coupling mechanism is the root cause of the overestimation bias in Q-learning (i.e., $\mathbb{E}[\max_a Q(s,a)] \geq \max_a\mathbb{E}[Q(s,a)]$),
which can be theoretically explained in the Lemma of \cite{thrun2014issues}.

\subsection{Double Q-Learning}

Double Q-learning \cite{hasselt2010double} maintains two independent Q-tables denoted by $Q^{A}_t(s,a),Q^{B}_t(s,a)$.
Different from Q-learning,
Double Q-learning uses the $\epsilon$-greedy policy with 
$\arg\max_{a \in \mathcal{A}} \frac{Q^{A}_t(s_t, a)+Q^{B}_{t}(s_t,a)}{2}$ to sampling.
In each time step $t$,
Double Q-learning random selects a Q-table, such as $Q^{A}_{t}$,
to update as:
\begin{equation}
    \begin{cases}
        a^*_{t+1} = \arg\max_{a_{t+1}} Q_t^A(s_{t+1}, a_{t+1}), \\
        Y_t^A = r_t + \gamma Q_{t}^B(s_{t+1}, a^*_{t+1}), \\
        Q_{t+1}^A(s_t,  a_t) = (1-\alpha) Q_{t}^A(s_t, a_t) {+} \alpha Y_t^A,
    \end{cases} 
\label{equ:dq} 
\end{equation}
where $Y_t^A$ is the TD target Q-value for $Q_t^A(s_t,a_t)$.

Equation (\ref{equ:dq}) illustrates that, during the Double Q-learning update,
the optimal action selection and its value evaluation rely on two independent Q-tables.
This decoupling mechanism is the root cause of the underestimation bias in Double Q-learning, 
which can be theoretically explained in Lemma 1 of \cite{hasselt2010double}.

\section{Method}
Example \ref{example:estimationBias} shows that existing methods fail in settings with large action spaces. To overcome this problem, this paper proposed a new method called \underline{A}ction \underline{I}ntersection \underline{D}ouble \underline{Q}-learning (AIDQ). The core idea is maintaining two Q-tables (e.g., $Q^1$ and $Q^2$), and using $Q^1$ to select a set of optimal actions (e.g., $topK$ actions), using $Q^1$ to identify the optimal action in the set by estimating their Q-values.

\begin{figure}
  \centering

  \includegraphics[width=\linewidth]{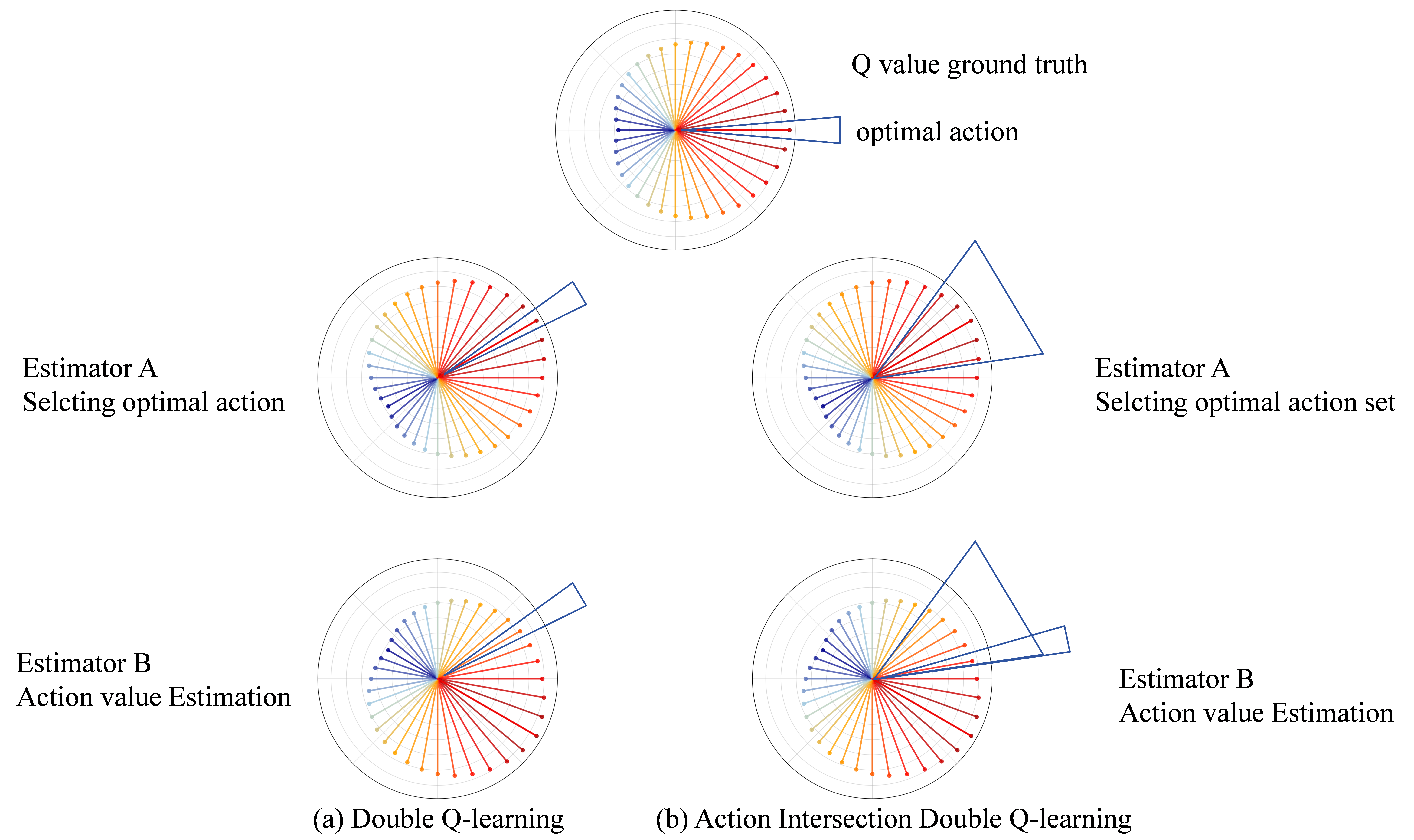}
  \caption{Double Q learning with action intersection}
  \label{fig:method}
\end{figure}

Figure \ref{fig:method}(b) shows our action intersection Double Q-learning.
We denote the two Q-functions at time step $t$ as $Q^1_t$ and $Q^2_t$. Firstly, randomly initialize two Q-functions as $Q^1_0$ and $Q^2_0$ at time step $t=0$.
Secondly, at any time step $t$, we generate a policy $\pi_t:\mathcal{S} \rightarrow \mathcal{A}$ by  
\begin{equation}
    a=\pi_t(s) \stackrel{\mathrm{def}}{=} \arg\max_{a'}\left[Q_t^1(s,a')+Q_t^2(s,a')\right].
\end{equation}

Secondly, at any time step $t$, the agent is at state $s_t$ and uses $\epsilon$-greedy policy with $\pi_t$ to decide the action $a_t$ to take where $\epsilon\in[0,1]$:
\begin{equation}
    a_t =
    \begin{cases}
        Sample(\mathcal{A}) &Prob.=\epsilon,\\
        \pi_t(s_t) &Prob.= 1-\epsilon,
    \end{cases}
\end{equation}
where $sample(\mathcal{A})$ is to choose a random action from $\mathcal{A}$ with equal probability. The environment receives the action $a_t$ and then gives the reward $r_t$ and the next state $s_{t+1}$.
This process generates a sampling data $(s_t, a_t,r_t, s_{t+1})$. The data will be used to update the Q-functions. 

Thirdly, randomly select either $Q^1_t$ or $Q^2_t$ with equal probability to update (e.g., select $Q^1_t$ to update). Then we select the $topK$ actions with the highest Q-values for state $s_{t+1}$ according to $Q^1_t$ as the action intersection set $\mathcal{A}_{t+1}^{K*}$:
\begin{equation}
    \mathcal{A}_{t+1}^{K*}= \operatorname{arg\,topK} \limits_{a' \in \mathcal{A}}Q^1_t(s_{t+1}, a'),
\end{equation}
where we extend the max operation to a topK operation. This set is called the action intersection set because it is not only selected by $Q^1_t$, but also used by $Q^2_t$. 
Note that $|\mathcal{A}_{t+1}^{K*}|=topK$.
The other Q-function $Q^2_t$ selects the optimal action for state $s_{t+1}$ from the action intersection set $\mathcal{A}_{t+1}^{K*}$ and gives its Q-value as:
\begin{equation}
    \max_{a_{t+1} \in \mathcal{A}_{t+1}^{K*}}Q^2_t(s_{t+1}, a_{t+1}).
\end{equation}

The target value $Y_t(s_t, a_t)$ for $Q^1_t$ is calculated as:

\begin{equation}
    Y^1_t(s_t, a_t) = r_t + \gamma \max_{a_{t+1} \in \mathcal{A}_{t+1}^{K*}}Q^2_t(s_{t+1}, a_{t+1}).
\end{equation}

Finally, update $Q^1_t$ as:
\begin{equation}
    Q^1_{t+1}(s_t, a_t) = (1-\alpha)Q^1_{t}(s_t, a_t) + \alpha Y^1_t(s_t, a_t).
\end{equation}

If $Q_t^2$ is chosen to update, the update rule for $Q_t^2$ is defined by swapping the roles of the two Q-functions in the update rule of $Q_t^1$, which is given by:
\begin{equation}
    \begin{cases}
        \mathcal{A}_{t+1}^{K*}= \operatorname{arg\,topK} \limits_{a' \in \mathcal{A}}Q^2_t(s_{t+1}, a'), \\
        Y_t^2(s_t, a_t) = r_t + \gamma \max_{a_{t+1} \in \mathcal{A}_{t+1}^K*}Q^1_t(s_{t+1}, a_{t+1}), \\
        Q_{t+1}^2(s_t, a_t) = (1-\alpha)Q_t^2(s_t, a_t) + \alpha Y_t^2(s_t, a_t).
    \end{cases}  
\end{equation}

Algorithm \ref{alg:ActionIntersectionDouleQlearning} outlines our \underline{A}ction \underline{I}ntersection \underline{D}ouble \underline{Q}-learning (AIDQ).
When $topK=1$, our method turns to Double Q-learning, which has a negative bias. 
When $topK=|\mathcal{A}|$, where $|\mathcal{A}|$ is the size of the action space, our method turns to two Q-tables alternately performing Q-learning updates, which has a positive bias. This feature helps to enable fine-grained control over a large bias range, from negative bias to positive bias in the large action space setting. Thus, we intuitively show that
the estimation bias of our method can be adjusted in a fine-grained and continuous manner
from negative bias to positive bias by increasing $topK$,
and there exists a suitable $K$ to remove the estimation bias.

\begin{algorithm}
	\caption{Action Intersection Double Q-learning}   
	\label{alg:ActionIntersectionDouleQlearning} 
    \raggedright
    \textbf{Init}: Q-tables $Q_0^1, Q_0^2$; start state $s_0$\\
    \textbf{Parameter}: Action Intersection set size $topK$
	\begin{algorithmic}[1] 
	
		\FOR {$t$ in $\{0,1,2,...\}$}
		\STATE Choose action $a_t$ at state $s_0$ by $\epsilon$-greedy policy
		\STATE $r_t, s_{t+1} \gets Env(s_t, a_t)$
        \STATE Sample a random probability $u \sim$ Uniform(0,1)

        \IF{$u < 0.5$}
			\STATE $\mathcal{A}^{K*}_{t+1} \gets \operatorname{arg\,topK} \limits_{a' \in \mathcal{A}}Q^1_t(s_{t+1}, a')$
            \STATE $Y_t=r_t+\gamma \max \limits_{a' \in \mathcal{A}^{K*}_{t+1}} Q^2_t(s_{t+1}, a')$
		      \STATE $Q^1_{t+1} \gets (1-\alpha)Q^1_{t} + \alpha Y_t$ 
		\ELSIF{$u >=0.5$}
			\STATE $\mathcal{A}^{K*}_{t+1} \gets \operatorname{arg\,topK} \limits_{a' \in \mathcal{A}}Q^2_t(s_{t+1}, a')$
            \STATE $Y_t=r_t+\gamma \max \limits_{a' \in \mathcal{A}^{K*}_{t+1}} Q^1_t(s_{t+1}, a')$
		      \STATE $Q^2_{t+1} \gets (1-\alpha)Q^2_{t} + \alpha Y_t$ 
		\ENDIF

		\ENDFOR
		
	\end{algorithmic}
\end{algorithm}

\section{Theoretical Analysis}

Let $W = \{W_1,..,W_M\}$ be a $M$ dimension vector of independent random variables.
We wish to estimate 
\begin{equation}
    \max_{i}\mathbb{E}[W_i].
\end{equation}
We have a sample set $S$ which is randomly sampled from $W$. Given split parameter $\rho \in [0,1]$, we can randomly split $S$ into two disjoint subsets $S^A$ and $S^B$, and $S^{C}$ with equal proportion. e.g.
\begin{equation}
    \begin{cases}
        S = S^A \cup S^B,\\
        S^A \cap S^B = \phi,\\
        |S^A|:|S^B|=1:1.
    \end{cases}
\end{equation}
Furthermore,We can interpret  $S^A$ and $S^{B}$ as collections of random variables:
\begin{equation}
    \begin{cases}
        S^A = \{X_{i,j}|i \in \{1,...,M\}, j \in \{1,...,N_A\}\},\\
        S^B = \{Y_{i,j}|i \in \{1,...,M\}, j \in \{1,...,N_B\}\},\\
    \end{cases}
\end{equation}
which satisfies:
\begin{equation}
	N_A:N_B = 1:1.
\end{equation}
For example, $X_{i,j}$ means the $j^{th}$ sample of $W_i$.

We define the following empirical mean estimator based on  $S^A$ and $S^B$, respectively:
\begin{equation}
	\bar{X_i} \stackrel{\mathrm{def}}{=} \frac{\sum_{j=1}^{N_A}X_{i,j}}{|N_A|} ,
\end{equation}

\begin{equation}
	\bar{Y_i} \stackrel{\mathrm{def}}{=} \frac{\sum_{j=1}^{N_B}Y_{i,j}}{|N_B|} ,
\end{equation}	

Thus, the optimal action set $\mathcal{A}^*$ with size $topK$ is
\begin{equation}
    \mathcal{A}^*=\operatorname{arg\,topK} \limits_{i \in \{0,1,...,M-1\}} \bar{X_i}
\end{equation}
Our estimator is
\begin{equation}
    \bar{Y}_{a^*},
\end{equation}
where
\begin{equation}
    a^*=\arg\max_{i\in \mathcal{A}^*}\bar{Y_i}.
\end{equation}

\begin{theorem}
\label{theo:unbiasedestimation}
Let $S^A$ and $S^B$ be the independent subsets of observations sampling from the random variable $W$, with an equal proportion. Let $\bar{X}_i$ and $\bar{X}_i$ be the mean estimator of $\mathbb{E}[W_i]$ based on $S^A$ and $S^B$, respectively. Let $\bar{Y}_{a*}$ be the estimator of $\max_i\mathbb{E}[W_i]$, where $a^* = \arg\max_{i \in \mathcal{A}^*}\bar{X}_i$, where $\mathcal{A}^*=\operatorname{arg\,topK} \limits_{i \in \{0,1,...,M-1\}} \bar{X_i}$ Then, there exists a $topK$ such that $\mathbb{E}[\bar{Y}_{a*}] =\max_i\mathbb{E}[W_i]$.
\end{theorem}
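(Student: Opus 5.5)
The plan works with $f(K) \stackrel{\mathrm{def}}{=} \mathbb{E}[\bar{Y}_{a^*(K)}] - \max_i \mathbb{E}[W_i]$ as a function of the integer $topK = K \in \{1,\dots,M\}$. The goal is a $K$ at which $f$ vanishes, obtained by locating where $f$ changes sign.

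First, the definition of $a^*$ has to be fixed. Read literally, $a^* = \arg\max_{i\in\mathcal{A}^*}\bar{X}_i$ coincides with $\arg\max_i \bar{X}_i$ for every $K$, because the top-$K$ set always contains the $\bar X$-maximizer. The estimator would then not depend on $topK$ at all, and the existential claim would reduce to the unbiasedness of the double estimator, which is false in general. I therefore read the statement as the estimator defined just before it, $a^* = \arg\max_{i\in\mathcal{A}^*}\bar{Y}_i$, which is also the AIDQ update.

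\textbf{Step 1 (endpoints).} For $K=1$, $\mathcal{A}^*=\{\arg\max_i \bar X_i\}$, so the estimator is the double estimator. Since $S^A$ and $S^B$ are independent, conditioning on $\bar X$ gives $\mathbb{E}[\bar Y_{a^*}] = \mathbb{E}\big[\mathbb{E}[W_{a^*}]\big] \le \max_i\mathbb{E}[W_i]$, which is Lemma 1 of \cite{hasselt2010double}. Hence $f(1)\le 0$. For $K=M$, $\mathcal{A}^*=\{1,\dots,M\}$, so the estimator is $\max_i \bar Y_i$. Jensen's inequality gives $\mathbb{E}[\max_i \bar Y_i]\ge \max_i \mathbb{E}[\bar Y_i] = \max_i\mathbb{E}[W_i]$, as in \cite{thrun2014issues}. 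Hence $f(M)\ge 0$.

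\textbf{Step 2 (monotonicity).} The top-$K$ sets are nested, $\mathcal{A}^*(K)\subseteq\mathcal{A}^*(K+1)$, so pointwise $\max_{i\in\mathcal{A}^*(K)}\bar Y_i \le \max_{i\in\mathcal{A}^*(K+1)}\bar Y_i$. Taking expectations shows that $f$ is nondecreasing in $K$. Consequently there is a unique threshold $K^\dagger = \min\{K : f(K)\ge 0\}$, and $f(K^\dagger-1)<0\le f(K^\dagger)$. This yields the bias-control picture stated after Algorithm \ref{alg:ActionIntersectionDouleQlearning}: as $topK$ increases, the bias moves monotonically from negative to positive.

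\textbf{Step 3 (exact zero) — the main obstacle.} Steps 1 and 2 give a sign change of $f$ over a finite set, but an intermediate-value argument does not produce an integer root. To obtain $f(K)=0$ at an actual $topK$, I would first check the cases where an endpoint is already unbiased. If $f(1)=0$ (for example when the $\bar X$-argmax is almost surely an optimal index) or $f(M)=0$ (for example degenerate $W$), that endpoint is the required $K$. Otherwise I would try to show $f(K^\dagger)=0$ by decomposing $f(K^\dagger)-f(K^\dagger-1)$ into the event that the $K^\dagger$-th ranked action beats all higher-ranked ones under $\bar Y$.

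I expect exactly this step to fail without additional hypotheses. A two-variable Gaussian example already shows that $f(1)<0<f(2)$ is possible, in which case no integer $topK$ is unbiased. The honest outcome of this plan is therefore that the statement holds exactly in the degenerate/endpoint cases, and otherwise only in the bracketing form of Step 2. Exact unbiasedness requires an extra assumption on the distribution of $W$ that forces $f(K^\dagger)=0$; I would state such an assumption explicitly rather than claim the result unconditionally.
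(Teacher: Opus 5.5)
Your Steps 1 and 2 are correct, and you are right not to force Step 3. Under the reading you adopt, the statement is false in general. This is also the reading the paper's proof uses, since it splits on whether the global $\bar Y$-maximizer $a^*_Y$ lies in $\mathcal{A}^*$. Your counterexample needs unequal means: with equal means the double estimator is exactly unbiased and $topK=1$ already works. Concretely, take $M=2$, one sample of each $W_i$ in each subset, $W_1\sim\mathcal{N}(0,1)$ and $W_2\sim\mathcal{N}(\delta,1)$ with $\delta>0$. Then $f(1)=-\delta\Pr(\bar X_1>\bar X_2)<0$ and $f(2)=\mathbb{E}[(\bar Y_1-\bar Y_2)^+]>0$, so neither admissible $topK$ is unbiased. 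The same $f(1)<0$ also disposes of the literal $\bar X$ reading.

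The paper argues differently, and its argument is a heuristic rather than a proof. It writes $\mathbb{E}[\bar Y_{a^*}]$ as a mixture over the event $\{a^*_Y=a^*\}$. It then substitutes the \emph{unconditional} biases $+n$ of the single estimator and $-m$ of a double estimator for the two branches, and solves $\Pr(a^*_Y=a^*)=n/(m+n)$ for $topK$. The steps fail as follows.
\begin{itemize}
\item The branches require conditional expectations given the event, not unconditional ones.
\item Off the event, the selected index is the $\bar Y$-argmax over $\mathcal{A}^*$, which is not independent of $\bar Y$ unless $topK=1$.
\item The coefficients of $n$ and $m$ are swapped between consecutive displays.
\item Decisively, the last step assumes that $\Pr(a^*_Y\in\mathcal{A}^*(K))$ equals the real number $n/(m+n)$ for some $K$. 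Over integer $K$ this probability takes at most $M$ values, so this is exactly the discrete intermediate-value gap you identified.
\end{itemize}
Your nested-set monotonicity is the rigorous core of the paper's intuition; the paper's mixing probability is nondecreasing in $K$ for the same reason. It proves what is actually true, and what Table~\ref{tab:tableres} shows: the bias is nondecreasing in $topK$, nonpositive at $topK=1$, nonnegative at $topK=M$, and only approximately zero at the best $K$.

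If you want exact unbiasedness with no distributional assumption, randomize the parameter. When $K^\dagger>1$, use $topK=K^\dagger$ with probability $p=-f(K^\dagger-1)/\bigl(f(K^\dagger)-f(K^\dagger-1)\bigr)$ and $K^\dagger-1$ otherwise, independently of the data. By linearity the bias vanishes; this is what the paper's continuous solve tacitly presumes.
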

\begin{proof}
 The proof is provided in Section \ref{sec:proof}.
\end{proof}

\section{Action Intersection Double DQN}

We denote the two deep Q-networks as $Q(s,a;\theta^1_t)$ and $Q(s,a;\theta^2_t)$ that are parameterized by $\theta^1_t$ and $\theta^2_t$, respectively. The input of the deep neural network is state $s$ and action $a$, the output is the estimated Q-value of state-action pair $(s,a)$, i.e.,
\begin{equation}
    \begin{cases}
        \hat{Q}^1_{s,a} = Q(s,a;\theta^1_t), \\
        \hat{Q}^2_{s,a} = Q(s,a;\theta^2_t).
    \end{cases}
\end{equation}

Then we denote two target networks as $Q(s,a;\theta^{1, target}_t)$ and $Q(s,a;\theta^{2, target}_t)$ that are parameterized by $\theta^{1, target}_t$ and $\theta^{2, target}_t$, respectively. They are delayed copies of the online networks. 

At the first step $t=0$, we randomly initialize the two deep neural networks $Q(s,a;\theta^1_t)$ and $Q(s,a;\theta^2_t)$ with parameters $\theta^1_0$ and $\theta^2_0$, respectively. Then we copy $\theta^1_0$ and $\theta^2_0$ to $\theta^{1, target}_0$ and $\theta^{2, target}_0$, respectively (i.e., $\theta^{1, target}_0=\theta^1_0$ and $\theta^{2, target}_0=\theta^2_0$).

At any time step $t$, we generate a policy $\pi_t:\mathcal{S} \rightarrow \mathcal{A}$ by  
\begin{equation}
    a=\pi_t(s) \stackrel{\mathrm{def}}{=} \arg\max_{a'}\left[Q(s,a';\theta^1_t)+Q(s,a';\theta^2_t)\right].
\end{equation}

Then, at any time step $t$, the agent is at state $s_t$ and uses $\epsilon$-greedy policy with $\pi_t$ to decide the action $a_t$ to take where $\epsilon\in[0,1]$:
\begin{equation}
    a_t =
    \begin{cases}
        Sample(\mathcal{A}) &Prob.=\epsilon\\
        \pi_t(s_t) &Prob.= 1-\epsilon
    \end{cases}
\end{equation}
where $sample(\mathcal{A})$ is to choose a random action from $\mathcal{A}$ with equal probability. The environment receives the action $a_t$ and then gives the reward $r_t$ and the next state $s_{t+1}$.
This process generates a sampling data $(s_t, a_t,r_t, s_{t+1})$. The data will be added to the replay buffer $\mathcal{B}$:
\begin{equation}
    \mathcal{B}=\mathcal{B} \cup \{(s_t, a_t,r_t, s_{t+1})\}
\end{equation}

The agent will repeat the above exploration process 1000 times. It makes sure that there are enough samples in the replay buffer to update networks in the following steps. 

At time step $t>1000$, the agent first interacts with the environment to sample trajectory data and update buffer $\mathcal{B}$ using the above exploration process. Secondly, sample a mini-batch $\mathcal{B}_{mini}$ uniformly at random without replacement from the buffer $\mathcal{B}$. Then update the Q-networks with the following rules. 

Firstly, select either $Q(\cdot;\theta^1_t)$ or $Q(\cdot;\theta^2_t)$ with equal probability to update (e.g., $Q(\cdot;\theta^1_t)$). For any sample $(s_i,a_i,r_i,s_{i+1}$ in $\mathcal{B}_{mini}$, we select the optimal action intersection set $ \mathcal{A}_{i+1}^{K*}$ for state $s_{i+1}$:
\begin{equation}
    \mathcal{A}_{i+1}^{K*}= \operatorname{arg\,topK} \limits_{a' \in \mathcal{A}}Q(s_{i+1}, a';\theta^1_t).
\end{equation}

The target value for $Q(s_i,a_i;\theta^1_t)$ is
\begin{equation}
    Y_i(s_i,a_i) = r_i +\gamma \max\limits_{a' \in \mathcal{A}^{K*}_{i+1}} Q(s_{i+1},a';\theta^{2, target}_t).
\end{equation}

The loss function is 
\begin{equation}
    J(\theta^1_t)=\frac{1}{|\mathcal{B}_{mini}|}\sum_{(s_i,a_r,r_i,s_{i+1})\in\mathcal{B}_{mini}} \left[Y_i(s_i,a_i)- Q(s_i,a_i;\theta^1_t)\right]^2.
\end{equation}

update parameters $\theta_{t}^1$ as:

\begin{equation}
    \theta_{t+1}^1 = \theta_{t}^1 + \alpha\frac{\partial J(theta_{t}^1)}{\partial\theta_{t}^1},
\end{equation}
where $\alpha$ is the learning rate.

If $Q(\cdot;\theta^2_t)$ is chosen to be updated, then exchange the role of the two Q-networks in the update rule of $Q(\cdot;\theta^1_t)$:

\begin{equation}
    \begin{cases}
        \mathcal{A}_{i+1}^{K*}= \operatorname{arg\,topK} \limits_{a' \in \mathcal{A}}Q(s_{i+1}, a';\theta^2_t), \\
        Y_i(s_i,a_i) = r_i +\gamma \max\limits_{a' \in \mathcal{A}^{K*}_{i+1}} Q(s_{i+1},a';\theta^{1, target}_t), \\
         J(\theta^2_t)=\frac{1}{|\mathcal{B}_{mini}|}\sum_{(s_i,a_r,r_i,s_{i+1})\in\mathcal{B}_{mini}} \left[Y_i(s_i,a_i)- Q(s_i,a_i;\theta^2_t)\right]^2, \\
        \theta_{t+1}^2 = \theta_{t}^2 + \alpha\frac{\partial J(\theta^2_t)}{\partial\theta_{t}^2}.
    \end{cases}
\end{equation}

Periodically, the parameters of the online network are copied to the target network, e.g., if $t$ mod $a==0$, then

\begin{equation}
    \begin{cases}
        \theta^{1, target}_{t+1}=\theta^1_{t+1}, \\
        \theta^{2, target}_{t+1}=\theta^2_{t+1},
    \end{cases}
\end{equation}
where a is the interval of copy.

Algorithm \ref{alg:ActionIntersectionDDQN} extends our AIDQ to complex and high-dimensional domains,
leads to \underline{A}ction \underline{I}ntersection \underline{D}ouble \underline{DQN} (AIDDQN).
We first parameterize the two Q-functions with deep neural networks $Q(s,a;\theta^1_t)$ and $Q(s,a;\theta^2_t)$.
And then to ensure stable learning in deep reinforcement learning setting \cite{mnih2013playing,mnih2015human},
we use two target networks denoted as $Q(s,a;\theta^{1, target}_t)$ and $Q(s,a;\theta^{2, target}_t)$, 
which are delayed copies of the online networks. 
These target networks are used to compute the temporal-difference targets, 
thereby preventing the feedback loops that can lead to divergence.

\begin{algorithm}
	\caption{Action Intersection DDQN}   
	\label{alg:ActionIntersectionDDQN} 
    \raggedright
    \textbf{Init:} \hspace{0.1em}\parbox[t]{0.9\linewidth}{
        Four Q-networks with random parameters $\theta^1_0$, $\theta^2_0$, $\theta^{1,\text{target}}_0$, $\theta^{2,\text{target}}_0$; replay buffer $\mathcal{B}=\{\}$; start state $s_0$}
    \textbf{Parameter}: Action Intersection set size $topK$; target Q-networks update step $V$
	\begin{algorithmic}[1] 
		\FOR {$t$ in $\{0,1,2,...\}$}

        \IF{$t$ mod $V$ ==0}
            \STATE $\theta^{1,\text{target}}_t \gets \theta^1_t$
            \STATE $\theta^{2,\text{target}}_t \gets \theta^2_t$
            
        \ENDIF
        
		\STATE Choose action $a_t$ at state $s_t$ by $\epsilon$-greedy policy
		\STATE $r_t, s_{t+1} \gets Env(s_t, a_t)$
		
		\STATE $\mathcal{B} \gets \mathcal{B} \cup \{(s_t, a_t, r_t, s_{t+1})\}$ \COMMENT{Update buffer}
		
		\STATE Sample minibatch $\mathcal{B}_{mini}$ of transitions from $\mathcal{B}$
		\STATE Sample a random probability $u \sim \mathrm{Uniform}(0,1)$
		
		\IF{$u<0.5$} 
        \FOR {$(s_i,a_i,r_i,s_{i+1}) \in \mathcal{B}_{mini}$}
            \STATE$ \mathcal{A}^{K*}_{i} \gets \operatorname{arg\,topK}\limits_{a' \in \mathcal{A}}Q(s_{i+1}, a';\theta^{1,target}_{t})$
		      \STATE $Y_i = r_i+\gamma \max\limits_{a' \in \mathcal{A}^{K*}_{i}} Q_{t}(s_{i+1},a';\theta^{2, target}_t)$
        \ENDFOR
    
		\STATE minimize $  \mathbb{E}_{(s_i,a_i,r_i,s_{i+1}) \in \mathcal{B}_{mini}}\left[Y_i - Q(s_i, a_i;\theta^1_t)\right]^2$ 
		
		\ELSE
		\FOR {$(s_i,a_i,r_i,s_{i+1}) \in \mathcal{B}_{mini}$}
            \STATE$ \mathcal{A}^{K*}_{i} \gets \operatorname{arg\,topK}\limits_{a' \in \mathcal{A}}Q(s_{i+1}, a';\theta^{2,target}_{t})$
		      \STATE $Y_i = r_i+\gamma \max\limits_{a' \in \mathcal{A}^{K*}_{i}} Q(s_{i+1},a';\theta^{1, target}_t)$
        \ENDFOR
		\STATE minimize $  \mathbb{E}_{(s_i,a_i,r_i,s_{i+1}) \in \mathcal{B}_{mini}}\left[Y_i - Q(s_i, a_i;\theta^2_t)\right]^2$

		\ENDIF

		\ENDFOR
		
	\end{algorithmic}
\end{algorithm}

\section{Experiment}
In this section, we evaluate the proposed method on both tabular settings and deep network settings. In the tabular setting, we evaluate and compare our method with 8 baselines in a Multi-armed bandit. In the deep network setting, we evaluate and compare our method with 8 baselines on 6 simulated environments. The code is at anonymous \url{https://anonymous.4open.science/r/AIDQ-D826/Readme.txt}.

\subsection{Multi-Armed Bandit}
\label{sec:exp-bandit}
\textbf{Environment:} 
Following previous work \cite{zhang2017weighted}, we consider a simple multi-armed bandit setting.
It includes one unterminated state $S$ with $N$ arms, i.e., $\mathcal{A}=\{a_i|i=1,2,..,N\}$.
For any arm $a_i$, it yields a reward drawn from a normal distribution $\mathcal{N}(\mu_i, \sigma^{2}_R)$, 
where $\mu_i=0$.

\noindent
\textbf{Parameters:}
Following previous work \cite{zhang2017weighted},
we set the discount factor $\gamma=0.95$,
the learning rate $\alpha_t(S,a)=\frac{1}{n_t(S,a)^{0.8}}$,
the exploration parameter $\epsilon_t(S)=\frac{1}{n_t(S)^{0.5}}$,
where $n_t(S)$ and $n_t(S,a)$ are the visited number of state $S$ and state-action pair $(S,a)$, respectively.
Note that under the above setting,
the optimal policy always chooses the action $a_1$ at state $S$,
and the maximum expected Q-value is as 
$\lim_{T \to +\infty}\sum_{t=0}^{T} \gamma^{t}=0$.
The initial Q-values obey $\mathcal{N}(0, \sigma^2_Q)$,
and all results are averaged over $1000$ runs.

\noindent
\textbf{Baseline:} 
We consider eight SOTA baselines as follows:
\begin{itemize}
    \item \textbf{Q-learning} \cite{watkins1992q} is one of the most classical algorithms of reinforcement learning. However, it faces the challenge of the overestimation bias.
    \item \textbf{Double Q-learning} \cite{hasselt2010double} was proposed to tackle the overestimation issue in standard Q-learning. It decouples the action selection from the value evaluation by maintaining two separate Q-value estimators. While effectively reducing overestimation, this decoupling can introduce an underestimation bias, potentially slowing down convergence.

    \item \textbf{Weighted Double Q-learning} \cite{zhang2017weighted} introduces a weighted averaging scheme to balance between the two Q-estimators in Double Q-learning. By dynamically adjusting the weights based on the uncertainty of each estimator, it aims to strike a better bias-variance trade-off, mitigating both overestimation and severe underestimation.

    \item \textbf{Averaged Q-learning} \cite{anschel2017averaged} proposes a simple yet effective approach that averages multiple Q-value estimates during the update. This averaging operation naturally reduces the variance of the value estimates, leading to more stable learning, though it may sometimes oversmooth the value function.

    \item \textbf{Maxmin Q-learning} \cite{lan2020maxmin} adopts a conservative strategy by using the minimum value among an ensemble of Q-functions for the target update. This design robustly controls overestimation and has shown strong performance in environments with high stochasticity, but may become overly pessimistic in deterministic settings.

    \item \textbf{Ensemble Bootstrapped Q-learning (EBQL)} \cite{peer2021ensemble} leverages an ensemble of Q-estimators, each trained on a different bootstrapped sample of the experience replay buffer. The diversity within the ensemble helps in better exploration and more robust value estimation, though at the cost of increased computational complexity.

    \item \textbf{Order Q-learning} \cite{tan2024oder} is a recent approach that employs order statistics to combine multiple Q-value predictions. By selecting a specific percentile (e.g., median or a higher-order statistic) as the target value, it attempts to interpolate between overestimation and underestimation, offering a flexible framework for bias control.
    
    \item \textbf{AC-C Double Q-learning (AC-CDQ)} \cite{jiang2021action} proposes a clipped double estimator for Double Q-learning, in order to reduce the underestimation bias. It maintains two estimators. When calculating the temporal difference target, the corresponding action value of the selected action in the first set of estimators is clipped by the maximum value in the second set of estimators.
    
\end{itemize}
 
To ensure fairness, all algorithms maintain two Q-tables, except for Q-learning. 
For the hyperparameters of algorithms, Weighted Double Q-learning \cite{zhang2017weighted} sets the adaptive adjustment factor as $10$;
Order Q-learning \cite{tan2024adaptive} sets the index of the order function as $2$.
AC-CDQ sets the candidate size as $2$.
Note that these hyperparameters of comparison baselines are chosen from the recommended hyperparameters,
which are fine-tuned.

We consider three different settings to evaluate the effect of our data multiplexing strategy as follows:
\begin{itemize}
    \item Case 1: different arms with $N \in \{20,40,60,80\}$;
    \item Case 2: different reward variance with $\sigma_R \in \{5,10,15,20\}$;
    \item Case 3: different initial Q-values variance with $\sigma_Q \in \{1,2,4,8\}$.

\end{itemize}
We set the arms with $N=40$, the reward variance with $\sigma_R=10$,
and the initial Q-values variance with $\sigma_Q=1$ for default.

\begin{figure}
  \centering

  \includegraphics[width=\linewidth]{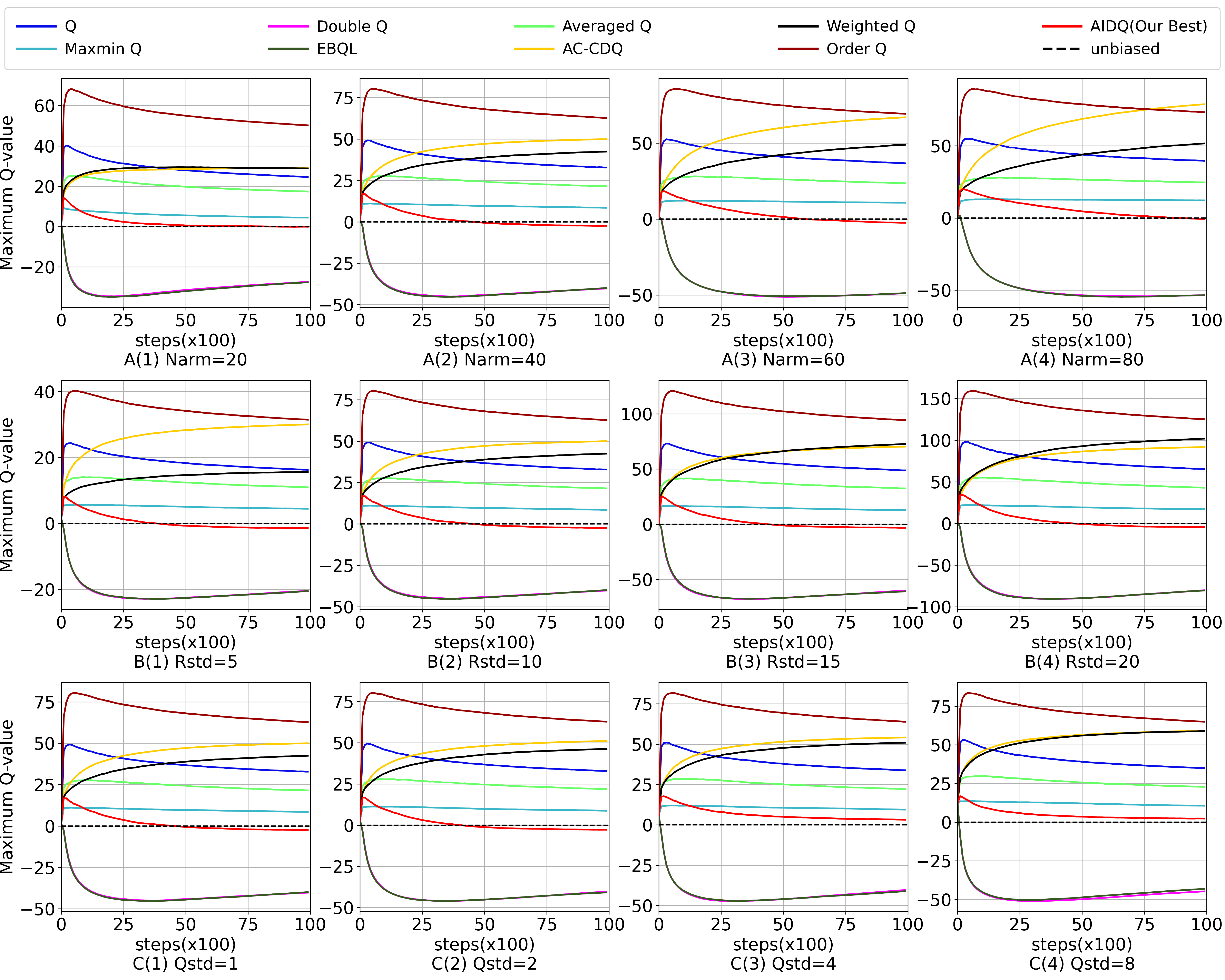}
  \caption{The average return per episode of eight different algorithms in comparison.}
  \label{fig:tablebaselines}
\end{figure}

\textbf{Baseline Comparison} 
Figure \ref{fig:tablebaselines} shows all experiment result of all baselines under 3 cases. For any subplot in Figure \ref{fig:tablebaselines}, the x-axis represents the update steps, and the y-axis represents the maximum Q-value estimated by the algorithm for state $S$. The closer an algorithm's curve is to the dashed unbiased estimation curve, the more accurate its Q-value estimation for state $S$.

Figure \ref{fig:tablebaselines}A(1)-A(4) shows the experiment result under case 1, where the number of arms is varied. There are three observations. 
(1) under different cases, the red curve of our method is in the middle and most closely aligns with the dashed unbiased estimation line. This implies that our method is able to achieve unbiased estimation with a large action space. Specifically, after 10,000 training steps, the maximum Q-value estimated by our algorithm aligns almost perfectly with the true Q-value.
(2) Under different cases, the curve of the baseline method deviates from the dashed unbiased estimation line. This implies that baselines suffer from estimation bias under the large space action setting. More specifically, Double Q and EBQL have a negative bias; other methods have a positive bias.
(3) As the number of arms increases, the deviation between the curve of the baseline algorithm and the unbiased estimation curve widens; the number of training steps required for our algorithm's curve to begin converging toward the unbiased estimation curve increases. 

Figure \ref{fig:tablebaselines}B(1)-B(4) shows the experiment result under case 2, where the standard deviation of reward are varied. There are two observations:
(1) under different cases, the red curve of our method is in the middle and most closely aligns with the dashed unbiased estimation line. This implies that the action intersection strategy is able to achieve unbiased estimation.
(2) As the standard deviation of reward increases, The distance between the curve of the baseline algorithm and the unbiased estimation curve increases significantly(e.g., when $Rstd=5$, the maximum Q value of Order Q can be up to 40; when $Rstd=20$, the maximum Q value of Order Q can be up to 150);  Our algorithm converges to the unbiased estimator in a similar number of steps (25,000) under all four conditions. This implies that our action intersection strategy is less affected by reward variance.

Figure \ref{fig:tablebaselines}C(1)-C(4) shows the experiment result under case 3, where the standard deviation of the initialized Q-value is varied. There are two observations:
(1) under different cases, the red curve of our method is in the middle and most closely aligns with the dashed unbiased estimation line.
(2) As the standard deviation of initialized Q-values increases, all curves of our method and baselines maintain the same relative positions. This implies that the variance in the initialized Q-values has little impact on the algorithm's performance.

\begin{figure}
  \centering

  \includegraphics[width=\linewidth]{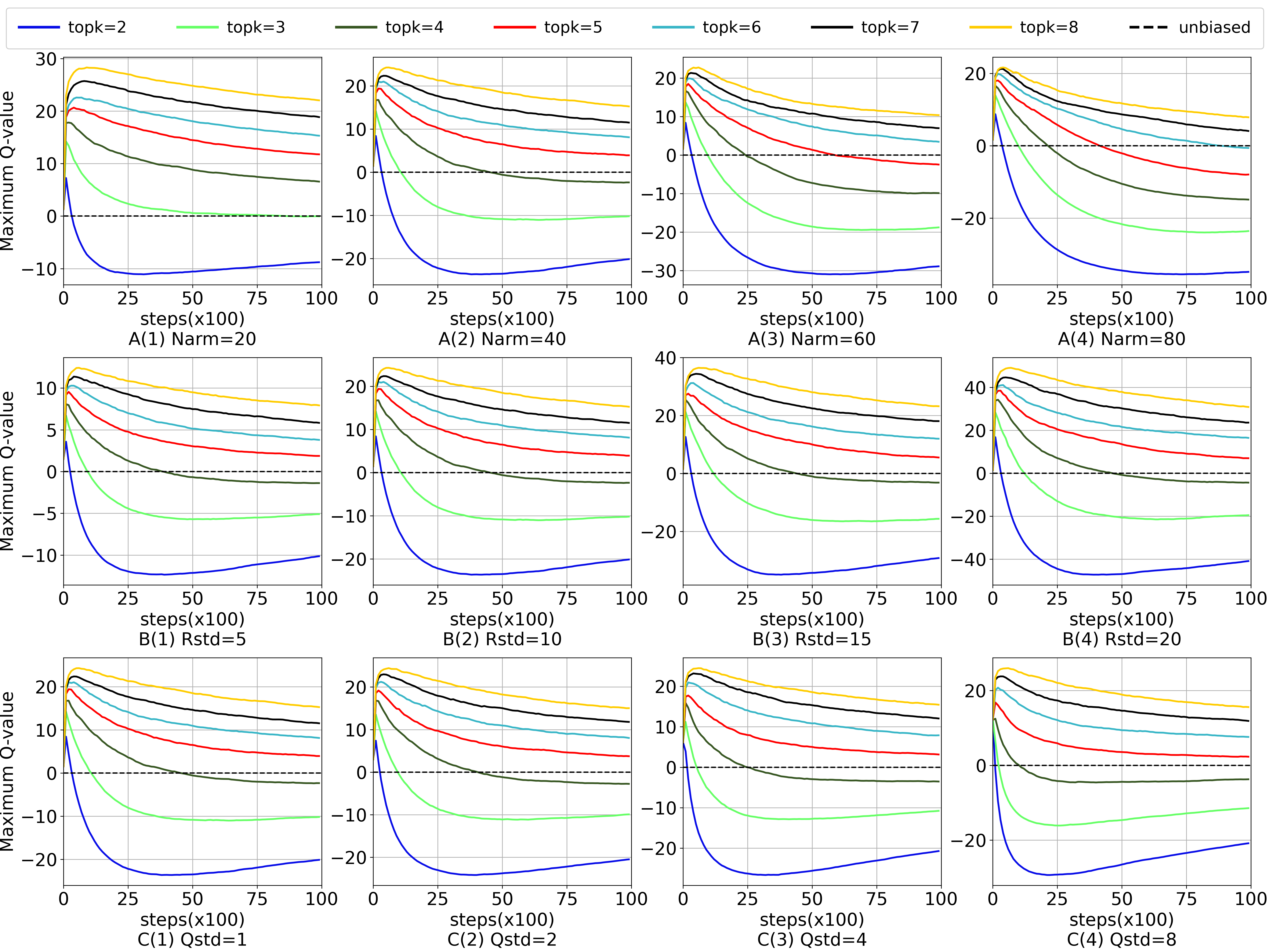}
  \caption{The maximum Q-value of our AIDQ with different $topK$.}
  \label{fig:tablepara}
\end{figure}

\textbf{Parameter Sensitivity} 
Figure \ref{fig:tablepara} shows the impact of the hyperparameter $topK$ on our algorithm under 3 cases. For any subplot in Figure \ref{fig:tablepara}, the x-axis represents the update steps, and the y-axis represents the maximum Q-value estimated by the algorithm for state $S$. The closer an algorithm's curve is to the dashed unbiased estimation curve, the more accurate its Q-value estimation for state $S$.

Figure \ref{fig:tablepara}A(1)-A(4) shows the experimental results on the sensitivity of parameter $topK$ in our algorithm under different action space sizes. There are three observations:
(1) Under different settings for the number of actions, as the value of $topK$ increases, the estimation bias can vary from negative bias to positive bias. This implies that it is possible to find a suitable value for $topK$ such that the estimated maximum Q-value approximates the unbiased Q-value.
(2) Under different settings for the number of actions, our algorithm can always find a suitable $topK$ such that the estimated maximum Q-value approximates the unbiased Q-value. This implies that it is feasible for our algorithm to find a suitable parameter $top-k$ during deployment.
(3) As the number of arms increases, the suitable value of $topK$ increases. When the number of arms is 20, the suitable $topK$ is 3; when the number of arms is 80, the suitable value of $topK$ is 6. The optimal value of parameter $topK$ increases approximately linearly with the number of actions. This implies that it aligns well with empirical intuition regarding our method’s hyperparameter tuning.

\begin{sidewaystable}
\centering
\caption{The maximum Q-value of different algorithms under three different settings. The optimal estimation values in each column are highlighted in bold. The second best result in each column are underlined.}
\begin{tabular}{lcccccccccccc}
\toprule 
\multirow{2.5}{*}{Algorithm} & \multicolumn{4}{c}{$N_{arm}$} & \multicolumn{4}{c}{$\sigma_{R}$} & \multicolumn{4}{c}{$\sigma_{Q}$} \\
 \cmidrule(lr){2-5} \cmidrule(lr){6-9} \cmidrule(lr){10-13}
 & 20 & 40 & 60 & 80 & 5 & 10 & 15 & 20 & 1 & 2 & 4 & 8 \\
\midrule 
Q & 24.64 & 32.84 & 36.72 & 39.60 & 16.30 & 32.84 & 48.81 & 65.30 & 32.84 & 32.89 & 33.80 & 34.96 \\
Double Q & -27.37 & -40.23 & -48.87 & -53.48 & -20.44 & -40.23 & -59.96 & -80.29 & -40.23 & -40.33 & -40.40 & -44.75 \\
Weighted Q & 28.86 & 42.46 & 48.92 & 51.60 & 15.65 & 42.46 & 72.69 & 101.75 & 42.46 & 46.33 & 50.89 & 58.87 \\
Averaged Q & 17.37 & 21.46 & 23.58 & 24.64 & 10.98 & 21.46 & 32.49 & 42.81 & 21.46 & 21.89 & 22.15 & 22.83 \\
Maxmin Q & \underline{4.47} & 8.51 & 10.80 & 12.19 & 4.48 & 8.51 & 12.80 & 17.09 & 8.51 & 8.78 & 9.46 & 10.64 \\
EBQL & -27.61 & -39.95 & -48.73 & -53.52 & -20.51 & -39.95 & -60.75 & -80.52 & -39.95 & -40.79 & -41.08 & -43.14 \\
AC-CDQ & 29.11 & 49.98 & 67.01 & 78.79 & 30.08 & 49.98 & 70.34 & 91.65 & 49.98 & 51.14 & 54.15 & 59.05 \\
Order Q & 50.22 & 62.83 & 69.31 & 73.29 & 31.50 & 62.83 & 94.33 & 125.05 & 62.83 & 62.84 & 63.84 & 64.93 \\
\midrule
AIDQ(topK=2) & -8.79 & -20.14 & -28.87 & -34.90 & -10.14 & -20.14 & -29.22 & -40.92 & -20.14 & -20.48 & -20.72 & -20.85 \\
AIDQ(topK=3) & \textbf{-0.06} & -10.20 & -18.76 & -23.61 & -5.10 & -10.20 & -15.67 & -19.55 & -10.20 & -9.91 & -10.78 & -11.45 \\
AIDQ(topK=4) & 6.56 & \textbf{-2.35} & -9.86 & -14.89 & \textbf{-1.38} & \textbf{-2.35} & \textbf{-3.17} & \textbf{-4.38} & \textbf{-2.35} & \textbf{-2.75} & \underline{-3.53} & \underline{-3.72} \\
AIDQ(topK=5) & 11.75 & \underline{3.93} & \textbf{-2.45} & -7.96 & \underline{1.87} & \underline{3.93} & \underline{5.50} & \underline{6.98} & \underline{3.93} & \underline{3.74} & \textbf{3.15} & \textbf{2.32} \\
AIDQ(topK=6) & 15.31 & 8.12 & \underline{3.46} & \textbf{-0.62} & 3.79 & 8.12 & 11.98 & 16.46 & 8.12 & 8.03 & 7.89 & 7.61 \\
AIDQ(topK=7) & 18.84 & 11.52 & 6.99 & \underline{4.09} & 5.83 & 11.52 & 18.05 & 23.57 & 11.52 & 11.80 & 12.05 & 11.89 \\
AIDQ(topK=8) & 22.03 & 15.27 & 10.37 & 7.88 & 7.92 & 15.27 & 23.19 & 30.85 & 15.27 & 15.02 & 15.45 & 15.60 \\
\bottomrule 
\end{tabular}
  \label{tab:tableres}
\end{sidewaystable}

Figure \ref{fig:tablepara}B(1)-B(4) shows the experimental results on the sensitivity of parameter $topK$ in our algorithm under different standard deviations of reward. One can observe that
(1) Under different settings for the standard deviation, as the value of $topK$ increases, the estimation bias can vary from negative bias to positive bias. This implies that it is possible to find a suitable value for $topK$ such that the estimated maximum Q-value approximates the unbiased Q-value.
(2) Under different settings for the standard deviation, our algorithm can always find a suitable $topK$, which is 5, such that the estimated maximum Q-value approximates the unbiased Q-value. This implies that the parameter $topK$ is insensitive to reward variance.

Figure \ref{fig:tablepara}C(1)-BC(4) shows the experimental results on the sensitivity of parameter $topK$ in our algorithm under different standard deviations of initialized Q-values. One can observe that
(1) Under different settings for the standard deviation, as the value of $topK$ increases, the estimation bias can vary from negative bias to positive bias. This implies that it is possible to find a suitable value for $topK$ such that the estimated maximum Q-value approximates the unbiased Q-value.
(2) As the standard deviation of initialized Q-values increases, the positions of the curves for different values of parameter K show little variation. Our algorithm can always find a suitable $topK$, which is 4, such that the estimated maximum Q-value approximates the unbiased Q-value. This implies that the parameter $topK$ is insensitive to the variance of initialized Q-values.

Table \ref{tab:tableres} comprehensively shows the maximum Q-value of different algorithms under three different settings,
where the values are evaluated in the final learning steps.
One can observe that across all experimental configurations (each column), our algorithm yields estimates that are closest to the ground truth value of 20.

\begin{figure}
  \centering
  \includegraphics[width=\linewidth]{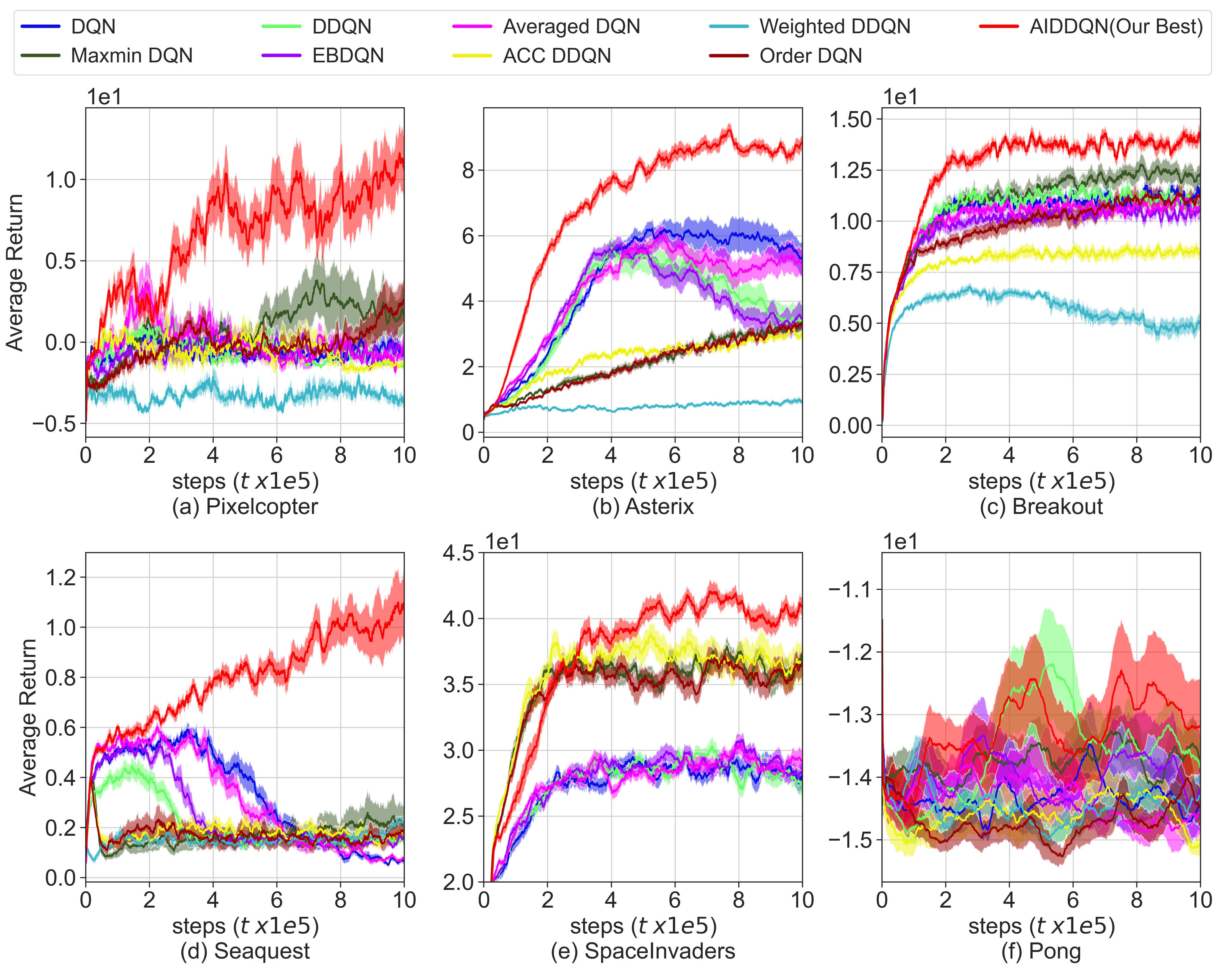}
  \caption{The average return per episode of eight different algorithms in comparison.}
  \label{fig:deepbaselines}
\end{figure}

\subsection{Deep Reinforcement Learning}
\textbf{Environment:} 
Following previous work \cite{lan2020maxmin}, this paper selects Pixelcopter, Asterix, Breakout, Seaquest, SpaceInvaders, and Pong from Gymnasium \cite{towers2024gymnasium}, PyGame Learning Environment (PLE) \cite{tasfi2016PLE}, and MinAtar\cite{gym-games} as our deep reinforcement learning environments.
For example, Seaquest environment is one of Atari 2600 games, but a simplified one to make experimentation more accessible and efficient. The default observation is a visual input of the game. Thus, the state space is complex, and it is hard for the agent to capture the relationship among states, actions, and rewards. 

\noindent
\textbf{Parameter:} 
Following previous work \cite{lan2020maxmin},
we set the discount factor $\gamma=0.99$,
the Adam optimizer \cite{zhang2018improved} with the learning rate $\alpha=0.0005, 0.001, 0.001, 0.001, 0.001, 0.0009, 0.0001$ for environments Pixelcopter, Asterix, Breakout, Seaquest, SpaceInvaders, and Pong, respectively
the mini-batch size $\vert\mathcal{B}_{mini}\vert=32$,
the replay buffer size  $\vert\mathcal{B}\vert=100,000$,
the update steps of the target Q-network $200$.
Note that $\epsilon$-greedy is applied as the exploration strategy with 
decreasing linearly from $1.0$ to $0.01$ in $1,000$ steps,
and after $1,000$ steps, $\epsilon$ is fixed to $0.01$.
All results reported are the average and standard deviation over $20$ independent runs 
with different random seeds. Note that the original action spaces of these environments were not large. To simulate a large action-space setting, we multiplied their action spaces by a factor. The scaling factors for environments Pixelcopter, Asterix, Breakout, Seaquest, SpaceInvaders, and Pong are 40, 20, 20, 20, 20, and 20, respectively.

\noindent
\textbf{Baseline:} 
To ensure consistency, the configurations of baseline algorithms are the same 
with the previous experiments in Section \ref{sec:exp-bandit}.

\begin{figure}
  \centering
  \includegraphics[width=\linewidth]{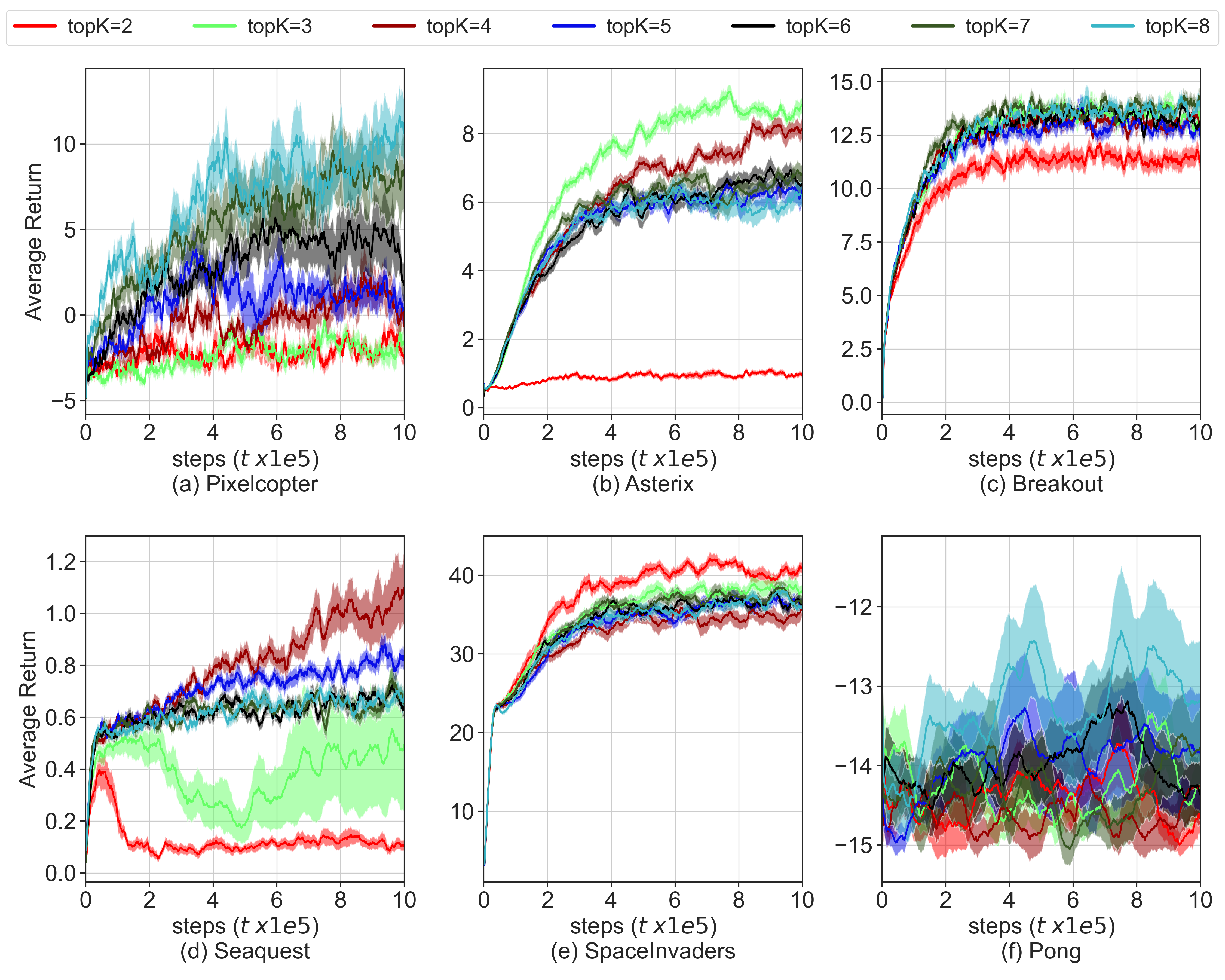}
  \caption{The average return per episode of our AIDQ with different $topK$.}
  \label{fig:deeppara}
\end{figure}

\textbf{Baseline Comparison} 
Figure \ref{fig:deepbaselines} shows the average return per episode of nine different algorithms in comparison across step $t$, where our AIDDQN varies the $topK=2,3,4,5,6,7,8$ and chooses the best result. One can observe that:
in all environments, the red reward curve of our method is above all the baseline curves. This implies that in the setting of a large action space, our method can achieve a better performance than other baselines. This is beneficial from the action intersection strategy, which is aiming at a large action space setting. More specifically, 
in (a) Pixecopter, the red curve of our method is at the top; the curve of Weighted DQN is at the bottom; the curves of other baselines are similar. This implies that in a large action space, all baselines have their bottleneck. 
In (b) Asterix, the curve of Weighted DQN is at the bottom; the curves of Maxmin DQN, ACC DDQN, and Order DQN are similar; the curves of other baselines are similar, which are at the middle. This implies that in this environment,  
In (c) Breakout, all curves are well-separated from each other.
In (d) Seaquest, all baseline curves show low performance levels. Several algorithms exhibit a performance trend that declines after peaks. Note that, in the parameter sensitivity experiment, when $topK$ takes certain values, the corresponding curves of our algorithm also exhibit a similar pattern. 
In (e) SpaceInvaders, the curves of Maxmin DQN, ACC DDQN, and Order DQN are similar, which are in the middle; the curves of other baselines are similar, which are at the bottom.
In (d) Pong, all curves exhibit considerable fluctuations and high variance. Nevertheless, the curve of our method is above all the baseline curves for most of the time.

\textbf{Parameter Sensitivity} 
Figure \ref{fig:deeppara} shows the average return per episode of our AIDDQN with different $topK$ across step $t$.
In (a) Pixelcopter, as the value of $topK$ increases, the corresponding curve gradually rises in position. This implies that the performance of our algorithm exhibits a clear positive correlation with the value of parameter $topk$ in this environment.
In (b) Asterix and (d) Seaquest, as the value of $topK$ increases, the position of the corresponding curve first increases and then decreases, with the optimal value achieved at $topK=3$. 
In (c) Breakout, as the value of $topK$ increases, a noticeable shift in the corresponding curve occurs when $topK$ changes from 2 to 3, while subsequent changes in $topK$ result in minimal variation in the curve's position. 
In (d) SpaceInvaders, as the value of $topK$ increases, the position of the corresponding curve first decreases and then increases, with the optimal value achieved at $topK=2$. 
In (e) Pong, all curves exhibit considerable fluctuations and high variance. Nevertheless, the optimal parameter is identified as $topK=8$.

Table \ref{tab:deeptableres} shows the average return per episode of different SOTA algorithms 
and our AIDDQN with different $topK$,
where the values represent the mean of the reward data from the final $10\%$ of the training process. 
One can observe that
our AIDDQN improves the average return per episode over SOTA baselines by at least
$(10.40-1.85)/1.85=463.45\%$, 
$(8.60-5.61)/5.61=53.21\%$,
$(13.90-12.23)/12.23=13.66\%$,
$(1.03-0.22)/0.22=366.95\%$, 
$(40.26-36.38)/36.38=10.65\%$, 
$((-13.06)-(-13.54))/|-13.54|=3.55\%$, 
in Pixelcopter, Asterix, Breakout, Seaquest, SpaceInvaders, and Pong, respectively.

\begin{sidewaystable}
\centering
\caption{The average return per episode of different algorithms under different environments. The optimal averaged return in each column are highlighted in bold. The second best result in each column are underlined}
\footnotesize
\setlength{\tabcolsep}{5pt}
\begin{tabular}{lcccccccccccc}
\toprule
\multirow{2.5}{*}{Algorithm} & \multicolumn{2}{c}{Pixelcopter} & \multicolumn{2}{c}{Asterix} & \multicolumn{2}{c}{Breakout} & \multicolumn{2}{c}{Seaquest} & \multicolumn{2}{c}{SpaceInvaders} & \multicolumn{2}{c}{Pong} \\
 \cmidrule(lr){2-3} \cmidrule(lr){4-5} \cmidrule(lr){6-7} \cmidrule(lr){8-9} \cmidrule(lr){10-11} \cmidrule(lr){12-13}
 & Reward & Improv & Reward & Improv & Reward & Improv & Reward & Improv & Reward & Improv & Reward & Improv \\
\midrule
DQN & -0.31 & -117.00\% & 5.61 & 0.00\% & 11.24 & -8.08\% & 0.07 & -68.16\% & 28.36 & -22.06\% & -14.51 & -7.22\% \\
DDQN & -1.28 & -169.33\% & 3.47 & -38.20\% & 10.95 & -10.43\% & 0.15 & -30.99\% & 28.02 & -22.98\% & \underline{-13.54} & \underline{0.00}\% \\
Averaged DQN & -0.91 & -149.33\% & 5.25 & -6.47\% & 10.80 & -11.67\% & 0.07 & -66.52\% & 28.94 & -20.45\% & -14.71 & -8.67\% \\
Maxmin DQN & 1.73 & -6.20\% & 3.16 & -43.66\% & 12.23 & 0.00\% & 0.22 & 0.00\% & 35.89 & -1.37\% & -13.60 & -0.45\% \\
Weighted DQN & -3.34 & -280.85\% & 0.92 & -83.60\% & 4.82 & -60.56\% & 0.19 & -13.14\% & 6.87 & -81.13\% & -14.40 & -6.34\% \\
EBDQN & -0.59 & -131.86\% & 3.42 & -39.02\% & 10.33 & -15.51\% & 0.15 & -32.93\% & 28.65 & -21.26\% & -14.10 & -4.13\% \\
ACC DDQN & -1.34 & -172.65\% & 2.98 & -47.00\% & 8.52 & -30.36\% & 0.16 & -28.66\% & 36.38 & 0.00\% & -14.85 & -9.71\% \\
Order DQN & 1.85 & 0.00\% & 3.17 & -43.49\% & 11.06 & -9.53\% & 0.17 & -24.79\% & 35.88 & -1.39\% & -14.28 & -5.49\% \\
\midrule
topK=2 & -1.72 & -193.42\% & 0.98 & -82.56\% & 11.41 & -6.68\% & 0.11 & -50.24\% & \textbf{40.26} & \textbf{10.65}\% & -14.85 & -9.67\% \\
topK=3 & -2.09 & -213.25\% & \textbf{8.60} & \textbf{53.21}\% & 13.30 & 8.79\% & 0.48 & 119.58\% & \underline{38.34} & \underline{5.37}\% & -14.23 & -5.11\% \\
topK=4 & 0.93 & -49.47\% & \underline{8.11} & \underline{44.47}\% & 13.12 & 7.26\% & \textbf{1.03} & \textbf{366.95}\% & 34.72 & -4.56\% & -14.49 & -7.01\% \\
topK=5 & 1.11 & -40.04\% & 6.32 & 12.52\% & 12.79 & 4.60\% & \underline{0.83} & \underline{273.93}\% & 36.65 & 0.72\% & -13.83 & -2.16\% \\
topK=6 & 4.25 & 129.91\% & 6.70 & 19.37\% & 13.26 & 8.46\% & 0.68 & 206.04\% & 36.73 & 0.96\% & -14.30 & -5.66\% \\
topK=7 & \underline{8.05} & \underline{335.75}\% & 6.71 & 19.56\% & \textbf{13.90} & \textbf{13.66}\% & 0.68 & 206.46\% & 37.48 & 3.02\% & -13.76 & -1.62\% \\
topK=8 & \textbf{10.40} & \textbf{463.45}\% & 5.99 & 6.68\% & \underline{13.72} & \underline{12.17}\% & 0.67 & 203.93\% & 37.05 & 1.83\% & \textbf{-13.06} & \textbf{3.55}\% \\
\bottomrule 
\end{tabular}
  \label{tab:deeptableres}
\end{sidewaystable}

\section{Proof of Theorem \ref{theo:unbiasedestimation}}
\label{sec:proof}
\begin{proof}

We define the optimal action selected from the action space $\mathcal{A}$ by $\bar{Y_i}$ as:
\begin{equation}
    a^*_Y=\arg\max_{i\in \mathcal{A}}\bar{Y_i}.
\end{equation}

The probability that $a^*_Y$ equals $a^*$ is a function of $topK$:
\begin{equation}
    Pr(a^*_Y=a^*) = f(topK)
\end{equation}

When $a^*_Y=a^*$, our estimator is Q Estimator:
\begin{equation}
     \bar{Y}_{a*}=\bar{Y}_{a^*_Y}=\max_{i}\bar{Y}_i.
\end{equation}

When $a^*_Y=a^*$, our estimator is Double Q Estimator:
\begin{equation}
     \bar{Y}_{a*}=\bar{Y}_{a^*_{Y^-}},
\end{equation}
where $a^*_{Y^-}$ means an action which is independent of $\bar{Y}_i$.

The expectation of our estimator is
\begin{equation}
    \mathbb{E}[\bar{Y}_{a^*}]=Pr(a^*_Y=a^*)\underbrace{\mathbb{E}[\bar{Y}_{a_Y^*}]}_{term1}  + [(1-Pr(a^*_Y=a^*)]\underbrace{\mathbb{E}[\bar{Y}_{a^*_{Y^-}}]}_{term2}.
\end{equation}	

For term 1, it is a Single estimator, thus we have
\begin{equation}
	\begin{split}
        \mathbb{E}[\bar{Y}_{a_Y^*}] = \mathbb{E}[\max_{i}\bar{Y}_i]\approx \max_{i}\mathbb{E}[W_i] +n,
	\end{split}
\end{equation}	
where $n > 0$.

For term 2, it is a Double estimator, thus we have
\begin{equation}
	\begin{split}
		\mathbb{E}[\bar{Y}_{a^*_{Y^-}}] \approx \max_{i}\mathbb{E}[W_i]-m,
	\end{split}
\end{equation}	
where $m > 0 $.

Therefor,we can estimate the expectation of our estimator by
\begin{equation}
	\begin{split}
		\mathbb{E}[\bar{Y}_{a^*}] &\approx Pr(a^*_Y=a^*)(\max_i\mathbb{E}[W_i]-m)+[1- Pr(a^*_Y=a^*)](\max_i\mathbb{E}[W_i]+n) \\
		&=\max_i\mathbb{E}[W_i] + [1-Pr(a^*_Y=a^*)]n -Pr(a^*_Y=a^*)m .
    \end{split}
\end{equation}	

Let
\begin{equation}
	\begin{split}
		[1-Pr(a^*_Y=a^*)]n -Pr(a^*_Y=a^*)m = 0.
	\end{split}
\end{equation}	

We get
\begin{equation}
	\begin{split}
		Pr(a^*_Y=a^*) &= \frac{n}{m+n}.\\
        f(topK)&=\frac{n}{m+n}
	\end{split}
\end{equation}	

We can thus conclude that there exists a $topK$ such that $\mathbb{E}[\bar{Y}_{a^*}] =\max_i\mathbb{E}[W_i]$.

\end{proof}

\section{Conclusion}
This paper identifies and addresses the challenge of bias amplification in Q-learning within large action spaces. 
We propose Action Intersection Double Q-learning (AIDQ), a novel method that mitigates the estimation bias by enabling fine-grained, continuous bias control through a simple top-K action intersection mechanism. 
Our method offers control over the bias range from negative to positive. 
Extensive experiments in both tabular and deep RL settings demonstrate that AIDQ consistently outperforms state-of-the-art baselines, particularly in environments with large action spaces. 
This work opens a new direction for robust bias mitigation in value-based reinforcement learning.
Future work focuses on the adaptive parameter $topK$ tuning.

\section*{Acknowledgments}
This work is supported in part by the National Natural Science Foundation of China (Grant No. 62372427, 62476261), in part by the Strategic Priority Research Program of Chinese Academy of Sciences (Grant No. XDB1590300), in part by the Science and Technology Innovation Key R\&D Program of Chongqing (CSTB2023TIAD-STX0031, CSTB2025TIAD-STX0023), and  in part by the Natural Science Foundation of Chongqing (CSTB2025NSCQ-LZX0061).

\bibliographystyle{cas-model2-names}

\bibliography{ref}

\end{document}